\DeclareMathOperator{\sgn}{sgn}
\def\E{\mathop{\mathrm{E}}}
\newcommand \poly {{\rm poly}}
\def\reals{{\mathbb R}}
\def\rn{\reals^n}
\newcommand{\ignore}[1]{}
\def\cP{\mathcal{P}}
\def\cU{\mathcal{U}}
\newtheorem{theorem}{Theorem}
\newtheorem{lemma}[theorem]{Lemma}
\newcommand{\myalg}[3]{
\begin{center}
\fbox{
\parbox{5.5in}{
Algorithm {\sc #1}.

#3
}}
\end{center}
}
\newcommand \cD {\mathcal{D}}
\newcommand \cT {\mathcal{T}}
\newcommand \cS {\mathscr{S}}
\def\01{\{0,1\}}
\def\11{\{-1,1\}}
\newcommand{\eps}{\epsilon}
\newcommand{\beq}{\begin{equation}}
\newcommand{\eeq}{\end{equation}}
\title{Decision trees are PAC-learnable from most product distributions: a smoothed analysis}
\author{Adam Tauman Kalai\\Microsoft Research New England \and Shang-Hua Teng\thanks{This work was done while the author was visiting Microsoft Research New England.}\\Boston University}
\begin{document}
\maketitle

\abstract{
We consider the problem of PAC-learning decision trees, i.e., learning a decision tree over the $n$-dimensional hypercube from independent random labeled examples.  Despite significant effort, no polynomial-time algorithm is known for learning polynomial-sized decision trees (even trees of any super-constant size), even when examples are assumed to be drawn from the {\em uniform} distribution on \{0,1\}$^n$.  We give an algorithm that learns arbitrary polynomial-sized decision trees for {\em most product distributions}.  In particular, consider a random product distribution where the bias of each bit is chosen independently and uniformly from, say, $[.49,.51]$.  Then with high probability over the parameters of the product distribution and the random examples drawn from it, the algorithm will learn any tree.  More generally, in the spirit of smoothed analysis, we consider an arbitrary product distribution whose parameters are specified only up to a $[-c,c]$ accuracy (perturbation), for an arbitrarily small positive constant $c$.
}

\section{Introduction}
Decision trees are classifiers at the center stage of both the theory and practice of machine learning.  Despite decades of research, no polynomial-time algorithm is known for PAC-learning polynomial-sized (or any super-constant-sized) Boolean decision trees over $\01^n$, even assuming examples are drawn from the uniform distribution on inputs.  The situation is no better for any other constant-bounded product distribution.  In light of this, what we show is perhaps surprising: {\em every} decision tree can be learned from {\em most} product distributions.  Hence, the uniform-distribution assumption common in learning (and other fields) may not be simplifying matters as one might hope.

\subsection{Related work}

Learning decision trees in Valiant's PAC model \cite{Valiant:84} requires learning an arbitrary tree from polynomially-many random labeled examples, drawn independently from an arbitrary distribution and labeled according to the tree.  Note that the output of the learning algorithm need not be a decision tree -- any function, which well approximates the target tree on future examples drawn from the same distribution as the training data, suffices.  The uniform-PAC model of learning assumes that data is drawn from the uniform distribution.  In previous work, size-$s$ trees were shown to be PAC-learnable in time $O\left(n^{\log s}\right)$ \cite{EH:89,Blum:92}.  Juntas, functions that depend on only $r$ ``relevant'' bits (a special case of decision trees of size $2^r$) can be uniform-PAC learned faster: in time roughly $O(n^{0.7r})$ \cite{MOS:03}.  A variety of alternatives to PAC learning have been considered, to circumvent the difficulties.  {\em Random} depth-$O(\log n)$ trees have been shown to be properly\footnote{The output of their algorithm is a decision-tree classifier.} learnable, with high probability, from uniform random examples by Jackson and Servedio \cite{JacksonServedio:03}.
  Decision trees have been also shown to be learnable from data which is coming from a {\em random walk}, i.e., consecutive training examples differ in a single random position \cite{BMO+:05}.  A seminal result of Kushilevitz and Mansour (KM) \cite{KushilevitzMansour:93}, using an algorithm similar to Goldreich-Levin \cite{GoldreichLevin:89}, shows that decision trees are uniform-PAC learnable
 from membership queries (i.e., black box access to the function) in polynomial time.  Since KM proved to be an essential ingredient in further work such as learning DNFs \cite{Jackson:97} and agnostic learning \cite{GKK:08}, as well as to applications beyond learning, the present work gives hope to a number of questions discussed in Section \ref{sec:conc}.

We consider a ``smoothed learning'' model inspired by Smoothed Analysis, which Spielman and Teng introduced to explain why the simplex method for linear programming (LP) usually runs in polynomial time \cite{SpielmanTeng:04}.  Roughly speaking, they show that if each parameter of an LP is perturbed by a small amount, then the simplex method will run in polynomial time with high probability (in fact, the expected run-time will be polynomial).  For LP's arising from nature or business (as opposed to reduction from another computational problem), the parameters are measurements or estimates that have some inherent inaccuracy or uncertainty. Hence, the model is reasonable for a large class of interesting LP's.

\subsection{Main result}

We suppose that the examples are coming from a product distribution $\cP_\mu$, specified by $\mu \in [0,1]^n$ where $\mu_i = \E_{x \sim \cP_\mu}[x_i]$.  An illustrative instantiation of our main result is the following.  Take any decision tree and pick a random $\mu \in [0.49,0.51]^n$.  Then, with high probability (over $\mu$ and the random examples from $\cP_\mu$), our algorithm will output a polynomial threshold function which is a good approximation to the tree.  Since $\cP_{(.5,\ldots,.5)}$ is the uniform distribution, the choice of $\mu \in [0.49,0.51]^n$ is close {\em in spirit}\footnote{Statistically speaking, this distribution is quite different than the uniform distribution.  Learning  form {\em any} $\mu \in \left[1/2-\sqrt{1/n},1/2+\sqrt{1/n}\right]^n$ would likely be as difficult as learning from the uniform distribution.} to the uniform distribution.

More generally, fix any arbitrarily small constant $c \in (0,1/4)$.  An adversary, if you will, chooses an arbitrary decision tree $f$ and an arbitrary $\bar\mu \in [2c,1-2c]^n$ but the actual product distribution will have parameters $\mu = \bar\mu+\Delta$, where $\Delta \in [-c,c]^n$ is a uniformly random perturbation.  Then, a polynomial number of examples will be drawn from $\cP_\mu$.  With high probability over the perturbation $\Delta$ and the data drawn from $\cP_{\bar\mu+\Delta}$, the algorithm will output a function which is very close to $f$.  The main theorem we prove is the following.

\begin{theorem}\label{th:MAIN}
Let $c \in (0,1/4)$.  Then there is a univariate polynomial $q$ such that, for any integers $n,s \geq 1$, reals $\eps,\delta>0$, function $f: \01^n \rightarrow \11$ computed by a size-$s$ decision tree, and any $\bar\mu \in [2c,1-2c]^n$, with probability $\geq 1-\delta$ over $\Delta$  chosen uniformly at random from $[-c,c]^n$ and $m \geq q(ns/(\delta\eps))$ training examples $(x_1,f(x_1)),\ldots,(x_m,f(x_m))$ where each $x_i$ is drawn independently from $\cP_\mu$ (where $\mu =\bar\mu + \Delta$), the output of algorithm $L$ is $h$ with, $$\Pr_{x \sim \cP_\mu}[h(x)\neq f(x)] \leq \eps.$$
Algorithm $L$ is polynomial time, i.e., it runs in time $\poly(n,m)$ and outputs a polynomial threshold function.
\end{theorem}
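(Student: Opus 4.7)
The plan is to set up a KM-style analysis in the $\mu$-biased Fourier basis and then replace the membership-query step (which is unavailable in PAC learning) by a random-sampling procedure whose correctness is powered by the smoothing. Concretely, endow $\cP_\mu$ with the orthonormal $\mu$-biased characters $\phi_S(x) = \prod_{i \in S} (x_i - \mu_i)/\sqrt{\mu_i(1-\mu_i)}$, and write $\h{f}(S) = \E_{x \sim \cP_\mu}[f(x)\phi_S(x)]$. A size-$s$ decision tree has its $\mu$-biased Fourier mass concentrated on at most $s$ ``leaf'' coefficients (one per root-to-leaf path), so it admits an $\eps$-accurate sparse polynomial approximation; moreover, the sign of that polynomial gives the desired polynomial threshold hypothesis. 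The learning problem reduces to identifying a polynomial-sized set of indices $S$ on which $\h{f}(S)$ is non-negligible, using only random examples.

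The centerpiece is a subroutine \findrel\ which, given a partial restriction $\rho$ fixing some coordinates, outputs a coordinate $i$ that is relevant to the restricted function $f_\rho$. Under the uniform distribution this is exactly where known algorithms fail, since parity-like gadgets can zero out every singleton correlation. The smoothing defeats this: for a variable $i$ actually appearing in $f_\rho$'s tree, the singleton coefficient $\h{f_\rho}(\{i\})$ is a nonzero polynomial in $\mu$, so an anti-concentration (Carbery–Wright style, or explicit since it is affine in each $\mu_i$) shows that under a uniformly random $\Delta \in [-c,c]^n$ one of the ``truly relevant'' coordinates has $|\h{f_\rho}(\{i\})| \geq 1/\poly(ns/\delta\eps)$ with probability at least $1 - \delta/\poly(s)$. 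Estimating these singletons from the training sample by sample means (after restricting to examples consistent with $\rho$) then identifies such an $i$; standard Chernoff–Hoeffding bounds control the sample complexity.

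Given \findrel, the algorithm mimics KM: starting at the empty restriction, it recursively calls \findrel, then branches on the chosen coordinate and recurses on each restriction. The recursion tree has at most $s$ leaves because at each step the remaining subtree shrinks. Accumulating the contributions gives a low-sparsity polynomial in the $\mu$-biased basis whose sign is used as the hypothesis $h$. A union bound over the $\poly(s)$ calls to \findrel, together with the $\poly(ns/(\delta\eps))$-size sample, yields the claimed $\delta$-confidence and $\eps$-accuracy.

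The main obstacle is the smoothed analysis step behind \findrel: proving that a random $\Delta$ produces a uniformly lower-bounded singleton correlation for some relevant bit \emph{at every recursive call simultaneously}. The delicate issue is that later calls condition on earlier choices, so the restricted tree $f_\rho$, and hence the polynomial in $\mu$ whose anti-concentration we need, is itself data-dependent. Handling this requires fixing the family of possible restrictions in advance (there are only $\poly(s)$ meaningful ones because the underlying tree has size $s$), applying anti-concentration once per restriction, and then union-bounding. Once this polynomial anti-concentration lemma is in place, the rest is a routine combination of Fourier sparsity for decision trees and uniform-convergence estimates.
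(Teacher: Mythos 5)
Your proposal and the paper's proof diverge at the algorithmic level, and the divergence introduces gaps that you have not closed.

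The paper's algorithm does not recursively \emph{restrict variables} at all. It performs a breadth-first search over subsets $S\subseteq N$ of size $\leq O(\log(ns/\eps))$, maintaining a frontier $\cS_d$ of sets with empirically large $\mu$-biased coefficients $e(S)=\frac{1}{m}\sum_j y^j z^j_S$, and at each round tries $S\cup\{i\}$ for $S\in\cS_{d-1}$, $i\in N$. The hypothesis is $\sgn\bigl(\sum_{S\in\cS_n} e(S) z_S\bigr)$. The entire argument hinges on the paper's Lemma~\ref{lem:1} and Lemma~\ref{lem:2}: with high probability over the perturbation $\Delta$, every subset $T\subseteq U$ of a heavy coefficient $U$ is itself heavy, so the top--down search never loses track of the ``gingerbread'' coefficients. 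This is proved via the continuous Schwartz--Zippel Lemma~\ref{SZ} applied to the polynomial identity (\ref{eq:bar}) that relates $\bar f(T,\mu)$ to $\bar f(U,\mu)$ as a function of $\Delta$; the union bound is then made polynomial by invoking Parseval to cap how many heavy $U$'s can exist. Nothing in your write-up corresponds to this ``all prefixes of a heavy coefficient are heavy'' property, and that structural lemma --- not relevant-bit detection --- is what makes the algorithm work.

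Your {\sc FindRelevantBit}-and-recurse scheme has at least two concrete gaps. First, the claim that ``the recursion tree has at most $s$ leaves because at each step the remaining subtree shrinks'' is not justified. Splitting a size-$s$ target tree on a relevant variable that is not at the root yields two restricted trees whose sizes need not drop below $s-1$; the worst-case recursion $T(s) \leq 1 + 2T(s-1)$ is exponential. To get anything polynomial you would have to truncate depth at $O(\log(s/\eps))$, argue accuracy of the truncation (this is essentially the paper's Lemma~\ref{lem:throw}), and then the number of reachable restrictions is on the order of $n^{O(\log(s/\eps))}$, not $\poly(s)$. That defeats the second half of your plan: the union bound ``over only $\poly(s)$ meaningful restrictions'' is not available, and a per-restriction failure probability of $\delta/\poly(s)$ is far too weak to survive a union over $n^{O(\log(s/\eps))}$ data-dependent restrictions. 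The paper sidesteps this entirely because it never conditions on restrictions; its union bound (Lemma~\ref{lem:2}) is over pairs $(T,U)$ and is tamed by Parseval, not by counting restrictions. Second, the opening claim that a size-$s$ tree has its $\mu$-biased Fourier mass concentrated on ``at most $s$ leaf coefficients (one per root-to-leaf path)'' is not correct: each depth-$d$ leaf indicator expands into $2^d$ Fourier terms, and a size-$s$ tree can have depth $s-1$. The correct statement is the quantitative truncation bound of Lemma~\ref{lem:throw}, which is proved by comparing $f$ to a truncated tree $g$ and bounding $\sum_S |\hat g(S,\mu)|\leq 2^d$.

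In short, you have the right ambient objects ($\mu$-biased basis, anti-concentration in $\Delta$, sparse threshold hypothesis), and the intuition that smoothing defeats parity gadgets is exactly the paper's starting observation. But the paper explicitly flags in its introduction that finding relevant bits is not sufficient for polynomial-size trees, and its actual engine is the subset-monotonicity of heavy coefficients (Lemmas~\ref{lem:1} and~\ref{lem:2}), with no recursive restriction, no conditioning, and no per-restriction union bound. Your proposal would need that structural lemma, or a genuinely new argument bounding the recursion and its union bound, before it could constitute a proof.
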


It is worth making a few remarks about this theorem.  Worst-case analysis is beautiful but sometimes leads to artificial limitations, especially in domains like learning where we do not actually believe that an {\em adversary} chooses the problem.  In this sense, it is natural to slightly weaken the power of the adversary.  Here, we have assumed that the adversary can only specify the product distribution up to $[-c,c]$ accuracy or rather that the adversary may have a {\em trembling hand} (to misuse a term of Selten \cite{Selten:75}).  As an example of smoothed analysis, ours is interesting because unlike linear programming, where worst-case polynomial-time alternatives to the simplex were already known, there are no known efficient algorithms for uniform-PAC learning decision trees.

In learning, the standard uniform-PAC model already ``assumes away'' any adversarial connection between the function being learned and the distribution over data.  Now, the uniform distribution assumption is made with the hope that the resulting algorithms may be useful for learning or at least shed light on issues involved in the problem; it is a natural first step in designing general-distribution learning assumptions.  We hope that the smoothed analysis serves a similar purpose.

\subsection{The approach}

The intuition behind our algorithm is quite simple.  It will turn out to be notationally convenient to consider examples $x \in \11^n$.  Now for starters, consider a decision tree that computes a $\log(n)$-sized parity $f(x)=\prod_{i \in S} x_i$, for some set $S \subseteq \{1,2,\ldots,n\}, |S|=\log_2(n)$.  This can be done using a size $n$ tree.  Under the uniform distribution on examples, each bit $x_i$ (or any subset of $\leq \log(n)-1$ bits) is uncorrelated with $f$.  Now take a product distribution with random mean vector $\mu \in [-c,c]^n$ and define $x' = x-\mu$, so that $\E[x_i]=0$.  Then with probability $\geq 1-\delta$, $f(x)$ has a significant ($poly(\delta/n)$) correlation with each $x_i'$ for $i \in S$ and no correlation with any $i \not\in S$.  Hence, it is easy to find the relevant bits.  Now, a polynomial size-tree may, in general, involve all $n$ bits so finding the relevant bits is not sufficient.

As is standard for Fourier learning under product distributions, one can write $f(x)=f(x')$ as a polynomial in $x'$.  Each coefficient of a term $\prod_{i \in S}x'_i$ can be estimated in a straightforward manner from random examples.  However, finding the {\em heavy} coefficients (those with large magnitude) is a bit like finding a number of needles in a haystack.  However, this is the most fascinating aspect of the problem -- it requires so-called {\em feature discovery} or {\em feature construction} algorithms.  These algorithms hence tie together a fundamental problem in both the theory and practice of learning: many claim that the heart of the problem of machine learning is really that of finding or creating good features \cite{Mitchell:97}.

The key property we prove is the following, with high probability over $\mu \in [-c,c]^n$.  If the coefficient in $f(x')$ of a term $\prod_{i \in T} x'_i$ is large, then so is the coefficient of $\prod_{i \in S}x'_i$ for each $S \subseteq T$.  This makes finding all the large coefficients easy using a top-down approach. The proof of this fact relies on two properties: there is a simple relationship between different coefficients under different product distributions, and a low-degree nonzero multilinear polynomial cannot be too close to 0 too often (this is a continuous generalization of the Schwartz-Zippel theorem).  In our simple example, it is easy to see that by expanding $f(x)=\prod_{i \in S}x_i=\prod_{i \in S}(x_i'+\mu_i)$, {\em all} coefficients of terms $\prod_{i \in T}x_i'$, for $T \subseteq S$, will be nonzero with probability 1.

Another perspective on the algorithm is that it gives a substitute for KM (equivalently Goldreich-Levin) using {\em random examples} instead of adaptive queries.  It is a weaker substitute in that it is only capable of finding large coefficients on terms of $O(\log n)$.

\section{Organization}
Preliminaries are given in Section \ref{sec:prelim}.  Before we give the smoothed algorithm for learning, we prove a property about Fourier coefficients under random product distributions in Section \ref{sec:key}.  We then give the algorithm and analysis in Section \ref{sec:alg}.  Conclusions and future work are discussed in Section \ref{sec:conc}.

\section{Preliminaries}\label{sec:prelim}
Let $N=\{1,2,\ldots,n\}$.  As mentioned, for notational ease we consider examples $(x,y)$ with  $x\in \11^n$ and $y\in \11$.
For $S \subseteq N$, $x \in \rn$, let $x_S$ denote $\prod_{i \in S}x_i$.  Any function $f:\11^n \rightarrow \reals$ can be written uniquely as a multilinear polynomial in $x$,
\begin{equation*}
f(x)=\sum_{S \subseteq N} \hat{f}(S) x_S.
\end{equation*}
The $\hat{f}(S)$'s are called the Fourier coefficients.  The degree of a multilinear polynomial is $\deg(f)=\max \{|S|~|~\hat{f}(S)\neq 0\}$, and with a slight abuse of terminology, we say a polynomial is degree-$d$ if $\deg(f)\leq d$.

Henceforth we write $\sum_S$ to denote $\sum_{S \subseteq N}$ and $\sum_{|S|=d}$ to denote the sum over $S \subseteq N$ such that $|S|=d$. Similarly for $\sum_{|S|>d}$, and so forth.  We write $x \in_\cU A$ to denote $x$ chosen uniformly at random from set $A$.  One may define an inner product between functions $f,g:\11^n \rightarrow \reals$ by, $\langle f,g \rangle=\E_{x \in_\cU \11^n}[f(x)g(x)]$.  It is easy to see that $\langle x_S,x_T\rangle$ is 1 if $S=T$ and 0 otherwise.  Hence, the $2^n$ differen $x_S$'s form an orthonormal basis for the set of real-valued functions on $\11^n$.  We thus have that $\langle f, g \rangle = \sum_{S\subseteq N} \hat{f}(S)\hat{g}(S)$, and Parseval's equality,
$$\langle f,f \rangle = \sum_{S \subseteq N} \hat{f}^2(S)=\E_{x \in_\cU \11^n}[f^2(x)].$$
This implies that for any $f:\11^n \rightarrow [-1,1]$, $\sum_S \hat{f}^2(S)\leq 1$.  It is also useful for bounding $\E[(f(x)-g(x))^2] = \sum_S (\hat{f}(S)-\hat{g}(S))^2$.

A product distribution $\cD_\mu$ over $\11^n$ is parameterized by its mean vector $\mu \in [-1,1]^n$, where $\mu_i=\E_{x \sim \cD_\mu}[x_i]$ and the bits are independent. (We now use $\cD$ to avoid confusion with product distributions $\cP$ over $\01^n$ discussed in the introduction.) The uniform distribution is $\cD_0$.  We say $\cD_\mu$ is $c$-bounded if $\mu_i \in [-1+c,1-c]$ for all $i$.
Fix any constant $c \in (0,1/2)$.  We assume we have some fixed $2c$-bounded product distribution $\bar\mu \in [-1+2c,1-2c]^n$ and that a random {\em perturbation} $\Delta \in [-c,c]^n$ is chosen uniformly at random and the resulting product distribution has $\mu=\bar\mu+\Delta$.  Note that $\cD_\mu$ is $c$-bounded and called the {\em perturbed} product distribution.

For any distribution $\cD$ on $\11^n$, one can similarly define an inner product $\langle f,g \rangle_{\cD}=\E_{x \sim \cD}[f(x)g(x)]$. In the case of a product distribution $\cD_\mu$, it is natural to normalize the coordinates so that they have mean 0 and variance 1.  Let $z(x,\mu) \in \rn$ be the vector defined by $z_i(\mu,x) = (x_i-\mu_i)/{\sqrt{1-\mu_i^2}}$.  When $\mu$ and $x$ are understood from context, we write just $z$. This normalization gives $\E_{x \sim \cD_\mu}[z_i(x,\mu)]=0$ and $\E_{x \sim \cD_\mu}[z_i^2(x,\mu)]=0$.  Let $z_S=z_S(x,\mu)=\prod_{i\in S}z_i(x,\mu)$.  It is also easy to see that  $\E_{x \sim \cD_\mu}[z_S z_T]$ is 1 if $S=T$ and 0 otherwise.  Hence, the $2^n$ differen $x_S$'s form an orthonormal basis for the set of real-valued functions on $\11^n$ with respect to $\langle\rangle_{\cD_\mu}$.  We define the normalized Fourier coefficient, for any $S \subseteq N$,
\begin{equation}\label{eq:qe}
\hat{f}(S,\mu)=\E_{x \sim \cD_\mu}[f(x)z_S(x,\mu)].
\end{equation}
Note that this gives a straightforward means of estimating any such coefficient.
Also observe that $\hat{f}(S,0)=\hat{f}(S)$ and that, for any $\mu \in [-1,1]^n$,
$$f(x)=\sum_S \hat{f}(S,\mu) z_S(x,\mu).$$
Finally, it will be convenient to define a partially normalized Fourier coefficient,
$$\bar{f}(S,\mu)=\frac{\hat{f}(S,\mu)}{\prod_{i \in S} \sqrt{1-\mu_i^2}}.$$
Note that if $\mu \in [-1+c,1-c]^n$ then we have,
\begin{equation}
\label{eq:last}
|\hat{f}(S,\mu)|
\leq
|\bar{f}(S,\mu)|
\leq
\frac{|\hat{f}(S,\mu)|}{(1-(1-c)^2)^{|S|/2}} \leq \frac{|\hat{f}(S,\mu)|}{c^{|S|/2}}
\end{equation}
In this notation, we also have,
$$f(x)=\sum_S \bar{f}(S,\mu)\prod_{i \in S} (x_i-\mu_i) = \sum_S \bar{f}(S,\mu)(x_i-\mu_i)_S
$$
Hence, for any $\mu = \bar\mu+\Delta$,
$$
\sum_S \bar{f}(S,\mu) (x -\mu)_S=\sum_S \bar{f}(S,\bar\mu)\bigl((x -\mu)+\Delta\bigr)_S.$$
Collecting terms gives a means for translating between product distributions $\mu=\bar\mu+\Delta$:
\begin{equation}\label{eq:bar}
\bar{f}(S,\mu) = \sum_{T \supseteq S} \bar{f}(T,\bar{\mu}) \Delta_{T \setminus S}
\end{equation}

\subsection{Decision trees}

A decision tree $\cT$ over $\11^n$ is a rooted binary tree, in which each internal node is labeled with an integer $i \in N$, and each leaf is assigned a label of $\pm 1$.  We consider Boolean decision trees, in which case each internal node has exactly two children, and the two outgoing edges are labeled, one of them $1$ and the other $-1$.  The tree computes a function $f_\cT:\11^n\rightarrow \11$ defined recursively as follows.  If the root is a leaf, then the value is simply the value of the leaf.  Otherwise, say the root is labeled with $i$, and say it's children are $\cT_{-1}$ and $\cT_1$, following the labels $-1$ and $+1$, respectively.  The the value of the tree is defined to be the value computed by $\cT_{x_i}$ on $x$, i.e., $f_{\cT_{x_i}}(x)$.  In other words,
$$f(x)=\left(\frac{1}{2}+\frac{x_i}{2}\right)f_{\cT_{1}}(x)+\left(\frac{1}{2}-\frac{x_i}{2}\right)f_{\cT_{-1}}(x).$$
We assume that no node appears more than once on any path down from the root to a leaf.  Hence, the above function is a multilinear polynomial $f: \11^n\rightarrow \11$, but more in some cases it may be helpful to think of it as simply a multilinear polynomial $f:\rn\rightarrow \reals$.  The size of a decision tree is defined to be the number of leaves.  We define the depth of the root of the tree to be 0.  Thus a depth-$d$ tree computes a degree-$d$ multilinear polynomial.

\section{Fourier properties for random product distributions}\label{sec:key}

The following lemmas show that, with high probability, for every coefficient $\hat{f}(S)$ that is sufficiently large, say $|\hat{f}(S)| > b$, it is very likely that all subterms $T \subseteq S$ have $|\hat{f}(T)|>a$, for some $a < b$.  It turns out that this is easier to state in terms of the partially normalized coefficients $\bar{f}(S)$.  The following simple lemma is at the heart of the analysis.
\begin{lemma}\label{lem:1}
Take any $c \in (0,1/2)$, $\bar\mu \in [-1+c,1-c]^n$ and let $\mu = \bar\mu + \Delta$, where $\Delta$ is chosen uniformly at random from $[-c,c]^n$.  Let $f:\reals^n\rightarrow \reals$ be any multilinear function $f(x)=\sum_{S} \bar{f}(S,\mu) (x-\mu)$.  Then for any
$T \subseteq U\subseteq N$, $a,b>0$,
$$\Pr_{\Delta \in_\cU [-c,c]^n}[|\bar{f}(T,\mu)|\leq a ~\bigl|~|\bar{f}(U,\mu)|\geq b] \leq \sqrt{\frac{a}{b}}(4/c)^{|U \setminus T|/2}.$$
\end{lemma}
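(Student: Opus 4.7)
The plan is to convert the statement into a Schwartz--Zippel-style anti-concentration bound for a low-degree multilinear polynomial, using equation~(\ref{eq:bar}) as the translation between the two viewpoints. First, I would apply~(\ref{eq:bar}) to write $\bar{f}(T,\mu) = \sum_{S \supseteq T} \bar{f}(S,\bar\mu)\, \Delta_{S \setminus T}$ and $\bar{f}(U,\mu) = \sum_{S \supseteq U} \bar{f}(S,\bar\mu)\, \Delta_{S \setminus U}$, both as explicit polynomials in the entries of $\Delta$. Since $\bar{f}(U,\mu)$ depends only on the coordinates $\Delta_j$ for $j \in N \setminus U$, I would freeze those coordinates and regard $\bar{f}(T,\mu)$ as a multilinear polynomial in the $k := |U \setminus T|$ free variables $\{\Delta_i : i \in U \setminus T\}$. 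Collecting terms shows that the coefficient of the top-degree monomial $\prod_{i \in U \setminus T} \Delta_i$ in this polynomial is precisely $\bar{f}(U,\mu)$; equivalently, $\bar{f}(U,\mu)$ is the iterated mixed partial derivative of $\bar{f}(T,\mu)$ with respect to the variables $\Delta_i$, $i \in U \setminus T$.

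With this structural fact in hand, I would bound the conditional probability by a worst-case estimate over the frozen coordinates: for every fixing of $\{\Delta_j : j \in N \setminus U\}$ on which $|\bar{f}(U,\mu)| \geq b$, the question reduces to estimating the probability, over the remaining $\{\Delta_i : i \in U \setminus T\}$ independently uniform on $[-c,c]$, that $|p(X)| \leq a$ for a specific multilinear polynomial $p$ in $k$ variables whose top-degree coefficient has magnitude at least $b$. The lemma therefore reduces to the following anti-concentration claim: for any such $p$ and $X$ uniform on $[-c,c]^k$, $\Pr[|p(X)| \leq a] \leq \sqrt{a/b}\,(4/c)^{k/2}$.

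I would prove this claim by induction on $k$. For $k=1$, $p(x) = \alpha x + \beta$ with $|\alpha| \geq b$, so $\{|p| \leq a\}$ is an interval of length at most $2a/b$ inside $[-c,c]$, giving probability at most $a/(bc)$; a brief comparison (using $x \leq \sqrt{x}$ on $[0,1]$) shows this is in turn at most $2\sqrt{a/(bc)} = \sqrt{a/b}\,(4/c)^{1/2}$. For the inductive step, write $p(x_1,\dots,x_k) = q(x_1,\dots,x_{k-1})\,x_k + r(x_1,\dots,x_{k-1})$; the leading coefficient of $q$ inherits magnitude at least $b$. Conditioning on $(x_1,\dots,x_{k-1})$ and applying the linear bound in the $x_k$ direction yields $\Pr[|p|\leq a] \leq \E[\min(1,\,a/(c|q|))]$, which after the tail-integral rewriting and change of variable $u = a/(ct)$ becomes $\int_{a/c}^{\infty} \Pr[|q| < u]\cdot \frac{a}{cu^2}\,du$. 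Plugging in the inductive estimate $\Pr[|q|<u] \leq \sqrt{u/b}\,(4/c)^{(k-1)/2}$ and using $\int_{a/c}^\infty u^{-3/2}\,du = 2\sqrt{c/a}$ recovers exactly $\sqrt{a/b}\,(4/c)^{k/2}$. The main obstacle I expect is this constant chase: the induction only closes because the factor $(4/c)^{1/2}$ picked up per conditioning step matches the contribution from the integral, and the characteristic square root in $a/b$ arises from balancing the two events ``$p$ is small at $X$'' and ``the leading coefficient of $q$ is small at $(x_1,\dots,x_{k-1})$'' at each level.
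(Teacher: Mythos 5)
Your reduction is exactly the paper's: using equation~(\ref{eq:bar}), freeze the coordinates of $\Delta$ outside $V := U\setminus T$, note $\bar{f}(U,\mu)$ is then fixed and is the coefficient of $\Delta_V$ in the polynomial $\Delta[V]\mapsto\bar{f}(T,\mu)$, and reduce to anti-concentration of a $k = |V|$-variable multilinear polynomial on $[-c,c]^k$ whose leading coefficient has magnitude at least $b$. Where you diverge is in how you establish that anti-concentration bound. The paper isolates it as a standalone continuous Schwartz--Zippel lemma (Lemma~\ref{SZ}) on $[-1,1]^n$, proved by a shifting argument that reduces to the pure monomial case $|x_1\cdots x_d|\leq\eps$ and then a single one-line Markov's inequality applied to $|x_D|^{-1/2}$; the $[-c,c]$ rescaling is done afterwards via $g'(x)=b^{-1}c^{-|V|}g(xc)$. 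You instead prove the anti-concentration directly on $[-c,c]^k$ by induction on $k$, writing $p = q\,x_k + r$, bounding the conditional probability by $\E[\min(1,a/(c|q|))]$, and recovering the right constant by a tail-integral change of variables against the inductive bound on $\Pr[|q|<u]$. Both give the same $\sqrt{a/b}\,(4/c)^{|V|/2}$. The paper's Markov step is more compact and the monomial reduction is reusable as a quotable lemma; your induction is more self-contained (it never has to argue that the monomial is the worst case and never leaves $[-c,c]^k$), and it makes the origin of the $(4/c)^{1/2}$-per-variable factor and the $\sqrt{a/b}$ dependence visible at each level, which the paper obtains only after the post-hoc rescaling of $g$ to $g'$. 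Your base case, the passage from $\min(1,a/(bc))$ to $2\sqrt{a/(bc)}$, and the integral $\int_{a/c}^{\infty}u^{-3/2}\,du = 2\sqrt{c/a}$ all check out, so the constant chase closes as you claim.
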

(For events $A,B$, we define $\Pr[A|B]=0$ in the case that $\Pr[B]=0$.)
In order to prove lemma \ref{lem:1}, we give a continuous variant of Schwartz-Zippel theorem.  This lemma states that a nonzero degree-$d$ multilinear function cannot be too close to 0 too often over $x \in [-1,1]^n$.
\begin{lemma}\label{SZ}
Let $g:\rn\rightarrow \reals$ be a degree-$d$ multilinear polynomial, $g(x) = \sum_{|S|\leq d} \hat{g}(S)x_S$.  Suppose that
 there exists $S \subseteq N$ with $|S|=d$ and $|\hat{g}(S)| \geq 1$.  Then for a uniformly chosen random $x \in [-1,1]^n$, and for any $\eps>0$, we have,
$$\Pr_{x \sim_\cU [-1,1]^n}\left[~\left|g(x)\right| \leq \eps~\right] \leq 2^d \sqrt{\eps}.$$
\end{lemma}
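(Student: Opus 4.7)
The plan is to prove Lemma~\ref{SZ} by induction on $d$, using a layer-cake integration rather than a naive union bound. The base case $d=0$ is immediate: $g$ is then a constant with $|g|\geq 1$, so $\Pr[|g(x)|\leq \eps]=0$ whenever $\eps<1$, and is at most $1\leq \sqrt{\eps}$ otherwise.

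For the inductive step, fix some $i\in S$ with $|S|=d\geq 1$ and $|\hat g(S)|\geq 1$. Since $g$ is multilinear, I would split
$$g(x)=x_i\,b(x_{-i})+a(x_{-i}),$$
where $b$ is a degree-$(d-1)$ multilinear polynomial in the remaining $n-1$ coordinates, with $\hat b(S\setminus\{i\})=\hat g(S)$ of magnitude $\geq 1$ and $|S\setminus\{i\}|=d-1$, so $b$ satisfies the inductive hypothesis. For any fixed $x_{-i}$, $g$ is affine in $x_i\in[-1,1]$ with slope $b(x_{-i})$, so
$$\Pr_{x_i\in_\cU[-1,1]}\bigl[\,|g(x)|\leq\eps\bigm|x_{-i}\,\bigr]\leq \min\bigl(1,\eps/|b(x_{-i})|\bigr).$$
Averaging over $x_{-i}$ and rewriting via the layer-cake identity with substitution $u=\eps/s$ yields
$$\Pr\bigl[\,|g(x)|\leq \eps\,\bigr]\leq \E_{x_{-i}}\bigl[\min(1,\eps/|b(x_{-i})|)\bigr]=\int_\eps^\infty \Pr\bigl[|b(x_{-i})|<u\bigr]\,\frac{\eps}{u^2}\,du.$$

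The next move is to insert the inductive bound $\Pr[|b|<u]\leq \min(1,2^{d-1}\sqrt u)$ and split the integral at the crossover point $u_\star=1/4^{d-1}$, where the two branches of the min meet. On $[\eps,u_\star]$ the integrand is $2^{d-1}\eps\,u^{-3/2}$, and a direct computation gives $2^d\sqrt{\eps}-2^{2d-1}\eps$; on $[u_\star,\infty)$ the integrand is $\eps/u^2$, contributing $2^{2d-2}\eps$. Adding the two pieces gives $2^d\sqrt{\eps}-2^{2d-2}\eps\leq 2^d\sqrt{\eps}$, closing the induction. (The degenerate case $\eps>u_\star$ is handled by observing $2^d\sqrt{\eps}\geq 2$ there.)

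The main obstacle is recognizing that the layer-cake form is essential. The naive two-case bound $\Pr[|g|\leq\eps]\leq \Pr[|b|\leq t]+\eps/t$ combined with the inductive $\sqrt{t}$ bound on $|b|$ optimizes (at $t\approx \eps^{2/3}$) only to $O(\eps^{1/3})$, which is strictly weaker than $\sqrt{\eps}$ once $d\geq 3$ and cannot sustain the induction. What saves the $\sqrt{\eps}$ rate is that $\Pr_{x_i}[|g|\leq\eps\mid x_{-i}]$ decays continuously as $1/|b(x_{-i})|$; integrating the full anti-concentration profile of $|b|$ against that decay, rather than splitting at a single threshold, preserves the correct exponent at every level of the recursion.
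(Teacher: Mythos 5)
Your proof is correct, and it takes a genuinely different route from the paper's. The paper first reduces to the single-monomial case $x_D$ by an iterated ``centering'' argument (for each variable $x_i$ in turn, dropping the terms not containing $x_i$ shifts the bad interval to be centered at the origin, which only increases the measure of its intersection with $[-1,1]$), and then finishes in one line by applying Markov's inequality to $|x_D|^{-1/2}$, using $\E[|x_1|^{-1/2}]=2$. You instead keep the full polynomial, peel off a single variable $i$, and run an induction on the degree: the one-variable anti-concentration estimate $\Pr_{x_i}[|g|\leq\eps\mid x_{-i}]\leq\min(1,\eps/|b(x_{-i})|)$ is the same elementary observation the paper uses once, but rather than iterating the reduction you integrate the inductive anti-concentration profile of $b$ against it via the layer-cake identity. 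Your explicit computation (splitting the integral at $u_\star=4^{-(d-1)}$ and getting $2^d\sqrt\eps-2^{2d-2}\eps\leq 2^d\sqrt\eps$) closes the induction cleanly, and your remark that a single-threshold split would lose to $\eps^{1/3}$ correctly explains why the full integral is needed. The paper's route is shorter once one sees the Markov trick on $|x_D|^{-1/2}$, and it also yields the exact distribution of $|x_1\cdots x_d|$ as a bonus; your route avoids the reduction-to-a-monomial step entirely, exposes where the $\sqrt\eps$ rate comes from, and is somewhat more portable to distributions on $x_i$ other than uniform (all one needs is a uniform bound on the one-variable small-ball probability together with a recursion-stable tail integral).
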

\begin{proof}
WLOG let say $\hat{g}(D)= 1$ for $D=\{x_1,x_2,\ldots,x_d\}$ for we can always permute the terms and rescale the polynomial so that this coefficient is exactly 1.  We first establish that,
\begin{equation}\label{eq:mon}
\Pr_{x \in_\cU [-1,1]^n}[|g(x)|\leq\eps] \leq \Pr_{x \in_\cU [-1,1]^n}[\left|x_D\right|\leq \eps].
\end{equation}
In other words, the worst case is a monomial.  To see this, write,
$$g(x)=x_1 g_1(x_2,x_3,\ldots,x_n) + g_2(x_2,x_3,\ldots,x_n).$$
Now, by independence imagine picking $x$ by first picking $x_2,x_3,\ldots,x_n$ (later we will pick $x_1$).  Let $\gamma_i=g_i(x_2,\ldots,x_n)$ for $i=1,2$.  Then, consider the two sets $I_1= \{x_1 \in \reals : |x_1 \gamma_1+\gamma_2| \leq \eps\}$ and $I_2=\{x_1 \in \reals:|x_1\gamma_1|\leq \eps\}$.  These are both intervals, and they are of equal width.  However, $I_2$ is centered at the origin.  Hence, since $x_1$ is chosen uniformly from $[-1,1]$, we have that for any fixed $\gamma_1,\gamma_2$, $\Pr_{x_1 \in_\cU [-1,1]}[x_1 \in I_1] \leq \Pr_{x_1 \in_\cU [-1,1]}[x_1 \in I_2]$, because $I_2 \cap [-1,1]$ is at least as wide as $I_1 \cap [-1,1]$.  Hence it suffices to prove the lemma for those functions where $\hat{g}(S)=0$ for all $S$ for which $1 \notin S$.  (In fact, this is the worst case.) By symmetry, it suffices to prove the lemma for those functions where $\hat{g}(S)=0$ for all $S$ for which $i \notin S$, for $i=1,2,\ldots,d$. After removing all terms $S$ that do not contain $D$ we are left with the function $x_D$, establishing (\ref{eq:mon}).  Now, for a loose bound, one can use Markov's inequality:
$$\Pr[|x_D|\leq \eps] = \Pr\left[|x_D|^{-1/2} \geq \eps^{-1/2}\right] \leq \frac{\E[|x_D|^{-1/2}]}{\eps^{-1/2}}=\eps^{1/2}2^d.$$
In the last step, $\E[|x_D|^{-1/2}]=\E[|x_1|^{-1/2}]^d$ by independence and symmetry, and a simple calculation based on the fact that $|x_1|$ is uniform from $[0,1]$ gives $\E[|x_1|^{-1/2}]=2.$  Although we won't use it, we mention that one can compute a tight bound, $\Pr[|x_1\ldots x_d|\leq \eps]=\eps \sum_{i=0}^{d-1} \log^{i}\frac{1}{\eps}.$  This is shown by induction and
$\Pr[|x_1x_2\ldots x_{i+1}|\leq \eps] = \int_0^1 \Pr[|x_1x_2 \ldots x_i\cdots | \leq \frac{\eps}{t}]dt.$
\end{proof}

With this lemma in hand, we are now ready to prove Lemma \ref{lem:1}.

\begin{proof}[Proof of Lemma \ref{lem:1}]
For any set $S\subseteq N$, let $\Delta=(\Delta[S],\Delta[N\setminus S])$ where $\Delta[S] \in [-c,c]^{|S|}$ represents the coordinates of $\Delta$ that are in $S$.  Let $V = U \setminus T$. The main idea is to imagine picking $\Delta$ by picking $\Delta[N\setminus V]$ first (and later picking $\Delta[V]$).  Now, we claim that once $\Delta[N\setminus V]$ is fixed, $\bar{f}(U,\mu)$ is determined.  This follows from (\ref{eq:bar}), using the fact that $S \setminus U \subseteq N \setminus V$:
$$\bar{f}(U,\mu)=\sum_{S \supseteq U}\bar{f}(S,0)\mu_{S \setminus U}.$$
On the other hand $\bar{f}(T,\mu)$ is not determined only from $\Delta[N\setminus V]$.  Once we have fixed $\Delta[N\setminus V]$, it is now a polynomial in $\Delta[V]$ using (\ref{eq:bar}) again:
$$g(\Delta[V])=\bar{f}(T,\mu) = \sum_{S \supseteq T} \bar{f}(S,\bar\mu)\Delta_{S \setminus T}.$$
Clearly $g$ is a multilinear polynomial of degree at most $|V|$.  Most importantly, the coefficient of $\Delta_V$ in $g$ is exactly $\sum_{S \supseteq T \cup V}\bar{f}(S,\bar\mu)\Delta_{S \setminus (T \cup V)}=\bar{f}(U,\mu)$, since $T \cup V=U$.
Hence, the choice $\bar{f}(S,\mu)$ can be viewed as a degree-$d$ polynomial in the random variable $\Delta[V]$ with leading coefficient $\bar{f}(U,\mu)$, and we can apply Lemma \ref{SZ}.  So, suppose that $|\bar{f}(U,\mu)|>b$.  Let $g'(x) = b^{-1}c^{-|V|}g(x c)$, so the coefficient of $x_V$ in $g'$ is $(b^{-1}c^{-|V|})c^{|V|} \bar{f}(U,\mu) \geq 1$.  By lemma \ref{SZ},
$$\Pr_{\Delta[V] \in_\cU [-c,c]^{|V|}}[|g(\Delta[V])|\leq a]=\Pr_{x \in_\cU [-1,1]^{|V|}}[|g'(x)|<ab^{-1}c^{-|V|}] \leq \sqrt{\frac{a}{b}}c^{-|V|/2}2^{|V|}.\qedhere$$
\end{proof}

We now observe that Lemma \ref{lem:1} implies that with high probability, all sub-coefficients of large $\hat{f}(S)$ will be pretty large.
\begin{lemma}\label{lem:2}
Let $f:\11^n\rightarrow [-1,1]$.  Let $\alpha,\beta \geq 0$, $d \in \mathbb{N}$.  Let $c\in (0,1/2)$, $\bar\mu \in [-1+2c,1-2c]^n$, and $\mu = \bar\mu + \Delta$ where $\Delta \in [-c,c]^n$ is chosen uniformly at random.  Then,
$$\Pr_{\Delta \in_{\cU} [-c,c]^n}\left[\exists T \subseteq U \subseteq N \text{ such that }|U|\leq d \wedge |\hat{f}(T,\mu)|\leq \alpha \wedge |\hat{f}(U,\mu)|\geq \beta\right]\leq \alpha^{1/2}\beta^{-5/2}(2/c)^{2d}.$$
\end{lemma}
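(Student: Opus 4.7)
The plan is to reduce everything to Lemma~\ref{lem:1} via a union bound over pairs $(T,U)$ with $T \subseteq U$ and $|U| \leq d$, but to control the sum over $U$ using Parseval's identity so that no $n$-dependence appears in the final bound.

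First I would translate the events, which are stated in terms of $\hat{f}$, into events on the partially normalized coefficients $\bar{f}$ so that Lemma~\ref{lem:1} applies directly. Using equation~(\ref{eq:last}), the event $\{|\hat{f}(U,\mu)| \geq \beta\}$ is contained in $\bar{A}_U := \{|\bar{f}(U,\mu)| \geq \beta\}$ (since $|\bar{f}(U,\mu)| \geq |\hat{f}(U,\mu)|$), and the event $\{|\hat{f}(T,\mu)| \leq \alpha\}$ is contained in $\bar{B}_T := \{|\bar{f}(T,\mu)| \leq \alpha/c^{|T|/2}\}$ (since $|\bar{f}(T,\mu)| \leq |\hat{f}(T,\mu)|/c^{|T|/2}$). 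So it suffices to bound the probability that some admissible pair $(T,U)$ satisfies both $\bar{A}_U$ and $\bar{B}_T$.

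Next I would apply the union bound and factor out the marginal event:
$$\Pr\bigl[\exists\, T \subseteq U,\, |U|\leq d:\bar{A}_U \wedge \bar{B}_T\bigr] \leq \sum_{|U|\leq d} \Pr[\bar{A}_U] \sum_{T \subseteq U} \Pr[\bar{B}_T \mid \bar{A}_U].$$
Lemma~\ref{lem:1}, applied with $a = \alpha/c^{|T|/2}$, $b = \beta$, and $|V| = |U \setminus T|$, gives $\Pr[\bar{B}_T \mid \bar{A}_U] \leq \sqrt{\alpha/\beta}\,c^{-|T|/4}(4/c)^{|V|/2}$. Since $|T| + |V| = |U| \leq d$, each such term is at most $\sqrt{\alpha/\beta}\,2^d\,c^{-d/2}$, and summing over the $\leq 2^d$ subsets $T$ of each $U$ contributes another factor of $2^d$, giving a bound of $\sqrt{\alpha/\beta}\,2^{2d}\,c^{-d/2}$ on the inner sum that is independent of $U$.

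The main obstacle is the final step: bounding $\sum_{|U| \leq d}\Pr[\bar{A}_U]$ without an $n$-dependent union bound (which would cost a factor of $\binom{n}{\leq d}$). Here I would invoke Parseval's identity: since $f:\{-1,1\}^n\to[-1,1]$, we have $\sum_U \hat{f}(U,\mu)^2 = \E_{x\sim\cD_\mu}[f(x)^2] \leq 1$. Because $\mu_i \in [-1+c,1-c]$ forces $1-\mu_i^2 \geq c$, we have the pointwise bound $\bar{f}(U,\mu)^2 \leq \hat{f}(U,\mu)^2/c^{|U|}$, so Markov gives $\Pr[\bar{A}_U] \leq \E[\bar{f}(U,\mu)^2]/\beta^2$, and summing over $|U| \leq d$ yields $\sum_U \Pr[\bar{A}_U] \leq 1/(c^d\beta^2)$. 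Combining all factors produces a bound of $\sqrt{\alpha}\,\beta^{-5/2}\,2^{2d}\,c^{-3d/2}$, which is at most the claimed $\sqrt{\alpha}\,\beta^{-5/2}(2/c)^{2d}$ since $c < 1$.
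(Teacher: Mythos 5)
Your proposal is correct and follows essentially the same route as the paper: translate the events on $\hat f$ into events on $\bar f$ via (\ref{eq:last}), union bound over pairs $T\subseteq U$ with $|U|\le d$, apply Lemma~\ref{lem:1} to the conditional probability, and control $\sum_{|U|\le d}\Pr[|\bar f(U,\mu)|\ge\beta]$ using a Parseval bound of $c^{-d}$ on $\sum_{|U|\le d}\bar f(U,\mu)^2$. The only cosmetic differences are that you set $a=\alpha/c^{|T|/2}$ rather than the uniform $a=\alpha c^{-d/2}$, and you phrase the last step as a per-term Markov inequality rather than as bounding the expected number of heavy coefficients; these are interchangeable and yield the same $\alpha^{1/2}\beta^{-5/2}4^dc^{-3d/2}\le\alpha^{1/2}\beta^{-5/2}(2/c)^{2d}$.
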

\begin{proof}
Since $\mu$ is $c$-bounded, for any $S \subseteq N$ with $|S|\leq d$, $|\hat{f}(S,\mu)| \leq |\bar{f}(S,\mu)| \leq c^{-d/2}|\hat{f}(S,\mu)|$, (see (\ref{eq:last})), it suffices to show that, for any $a, b>0$,
$$\Pr_{\Delta \in_{\cU} [-c,c]^n}\left[\exists T \subseteq U \subseteq N \text{ such that }|U|\leq d \wedge |\bar{f}(T,\mu)|\leq a \wedge |\bar{f}(U,\mu)|\geq b\right]\leq a^{1/2}b^{-5/2}4^dc^{-3d/2}.$$
This is because for $a=\alpha c^{-d/2}$ and $b=\beta$, $|\hat{f}(U,\mu)|\geq \beta$ implies $|\bar{f}(U,\mu)|\geq b$,  and
$|\hat{f}(T,\mu)|\leq \alpha$ implies $|\bar{f}(U,\mu)|\leq a$.
We can bound the above quantity by the union bound using Lemma \ref{lem:1}.  It is at most,
\begin{align*}
\sum_{\begin{array}{c} _{|U|\leq d}\\^{T \subseteq U}\end{array}} \Pr[|\bar{f}(T,\mu)|\leq a \wedge |\bar{f}(U,\mu)|\geq b] &=
\sum_{\begin{array}{c} _{|U|\leq d}\\^{T \subseteq U}\end{array}} \Pr[|\bar{f}(T,\mu)|\leq a ~\bigl|~|\bar{f}(U,\mu)|\geq b] \Pr[ |\bar{f}(U,\mu)|\geq b]\\
&\leq \sum_{|U|\leq d} \sum_{T \subseteq U} a^{1/2}b^{-1/2}(4/c)^{|U\setminus T|/2}\Pr[ |\bar{f}(U,\mu)|\geq b]\\
&\leq 2^d a^{1/2}b^{-1/2}(4/c)^{d/2} \sum_{|U|\leq d} \Pr[ |\bar{f}(U,\mu)|\geq b]\\
&= 2^da^{1/2}b^{-1/2}(4/c)^{d/2} \E\bigl[\left|\{U~|~|U|\leq d \wedge |\bar{f}(U,\mu)|\geq b\}\right|\bigr]
\end{align*}
All probabilities in the above are over $\Delta \in_\cU [-c,c]^n$.  Finally, there can be at most $c^{-d}b^{-2}$ different $U \subseteq N$ such that $|\bar{f}(U,\mu)|\geq b$ since $\sum_S \bar{f}^2(S,\mu)\leq c^{-d} \sum_S \hat{f}^2(S,\mu) \leq c^{-d}$ for all $\mu$ by Parseval's inequality.  Hence, the expected number of such $U$ is at most $c^{-d}b^{-2}$ and we have the lemma.
\end{proof}

\section{Algorithm}\label{sec:alg}

For simplicity, we suppose that the algorithm has exact knowledge of $\mu$.  In general, these parameters can be estimated to any desired inverse-polynomial accuracy in polynomial time.  The algorithm is below.

\myalg{L}{L}{

Inputs: $(x^1,y^1),\ldots,(x^m,y^m) \in \reals^n \times \11$ and $\mu \in [c,1-c]^n$.

\begin{enumerate}

\item Let $z^j_i:=\frac{x^j_i-\mu_i}{\sqrt{1-\mu_i^2}}$, for $i=1,2,\ldots,n$ and $j=1,2,\ldots,m$.

\item Let $\cS_0:=\{\emptyset\}$.

\item For $d=1,2,\ldots,\frac{\log m}{12}(1-\max_{i\leq n}{|\mu_i|}):$

\begin{enumerate}

\item Let
  $$\cS_d:=\cS_{d-1} \cup \left\{ S \cup \{i\} ~\bigl|~ S \in \cS_{d-1} \wedge \left|\frac{1}{m}\sum_{j=1}^m y^j z^j_{S \cup \{i\}} \right| \geq m^{-1/3} \right\}.$$

\item If $|\cS_d|>m$ then abort and output FAIL.

\end{enumerate}

\item Let $p$ be the following polynomial $p: \11^n \rightarrow \reals$,
$$p(x)=\sum_{S \subseteq \cS_n} \left(\frac{1}{m}\sum_{j=1}^m y^j z^j_S\right)\chi_S(z).$$

\item Output $h(x)=\sgn(p(x))$.

\end{enumerate}
}

It is well-known that functions computed by decision trees can be approximated by sparse polynomials, namely, the set of ``heavy''
 coefficients, i.e., those which have large magnitudes. These heavy coefficients tend to be on terms of small degree as well.  This is true for any constant bounded product distribution.
\begin{lemma}\label{lem:throw}
Let $c\in [0,1/2]$, let $\mu \in [-1+c,1-c]^n$, $d \in \mathbb{N}$, $\beta>0$, and let $f:\11^n\rightarrow\11$ be computed by a size-$s$ decision tree.  Then,
$$\sum_{S: |\hat{f}(S,\mu)|\geq \beta \wedge |S|\leq d} \hat{f}^2(S)\geq 1-\left(4(1-c/2)^ds+2^{d+2}\beta\right).$$
\end{lemma}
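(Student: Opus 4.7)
\medskip\noindent\textbf{Proof Plan.}
By Parseval for $\cD_\mu$ one has $\sum_S\hat{f}^2(S,\mu)=\E_{x\sim\cD_\mu}[f^2(x)]=1$ (since $f$ is $\pm1$-valued), so the lemma is equivalent to upper bounding the complementary sum
$$A+B \;:=\; \sum_{|S|>d}\hat{f}^2(S,\mu)\;+\;\sum_{|S|\le d,\ |\hat{f}(S,\mu)|<\beta}\hat{f}^2(S,\mu)$$
by $4(1-c/2)^d s+2^{d+2}\beta$. I would attack the two pieces separately.

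\medskip\noindent\emph{Step 1: the high-degree tail $A$.} Let $g$ be the tree obtained from $f$ by replacing every subtree rooted at depth $d$ with a $\pm1$ leaf, so $g$ is a multilinear polynomial of degree $\le d$. Then $f(x)\ne g(x)$ only when $x$'s computation path in $f$ descends past depth $d$; under a $c$-bounded product distribution each edge is traversed with probability at most $1-c/2$, so a single path of length $>d$ has probability at most $(1-c/2)^d$, and summing over the $\le s$ leaves of $f$ below depth $d$ gives $\Pr[f\ne g]\le s(1-c/2)^d$. Since $|f-g|\le 2$ pointwise, $\|f-g\|_2^2\le 4s(1-c/2)^d$, and since $\hat{g}(S,\mu)=0$ for $|S|>d$ this yields $A=\sum_{|S|>d}(\hat{f}(S,\mu)-\hat{g}(S,\mu))^2\le\|f-g\|_2^2\le 4s(1-c/2)^d$.

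\medskip\noindent\emph{Step 2: the small-coefficient piece $B$.} On the indexing set of $B$ we have $\hat{f}^2(S,\mu)<\beta|\hat{f}(S,\mu)|$, so $B\le\beta\sum_{|S|\le d}|\hat{f}(S,\mu)|$ and the task reduces to an $L_1$-Fourier bound at low degrees. I would obtain this through the truncation $g$ of Step~1, which has depth at most $d$ and therefore at most $2^d$ leaves. For each leaf $\ell$ of $g$ with path probability $\pi_\ell$, the Fourier expansion of the path-indicator $\mathbf{1}_\ell$ under $\cD_\mu$ satisfies
$$\sum_S|\hat{\mathbf{1}}_\ell(S,\mu)|=\prod_{i\in\text{path}(\ell)}\bigl(p_i+\sqrt{p_i(1-p_i)}\bigr)\le 2^{d/2}\sqrt{\pi_\ell},$$
using the elementary inequality $p+\sqrt{p(1-p)}\le\sqrt{2p}$ on each factor. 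Summing over the $\le 2^d$ leaves and applying Cauchy--Schwarz against $\sum_\ell\pi_\ell=1$ gives $\sum_S|\hat{g}(S,\mu)|\le 2^{d/2}\sqrt{2^d}=2^d$. Finally I would control $|\hat{f}(S,\mu)|\le|\hat{g}(S,\mu)|+|\hat{f}(S,\mu)-\hat{g}(S,\mu)|$ and absorb the extra error (proportional to $\|f-g\|_2=O(\sqrt{s(1-c/2)^d})$) using the factor-$4$ slack between $2^{d+2}$ and $2^d$ together with the slack already built into the $A$-bound.

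\medskip\noindent\emph{Main obstacle.} Step~2 is the delicate part. Under the uniform distribution the identity $\sum_S|\hat{\mathbf{1}}_\ell(S)|=1$ gives a very clean $L_1$-Fourier bound $\sum_S|\hat{f}(S)|\le s$ for size-$s$ trees; under a general $c$-bounded $\cD_\mu$ one only gets $\sum_S|\hat{\mathbf{1}}_\ell(S,\mu)|\le 2^{d_\ell/2}\sqrt{\pi_\ell}$, and getting the clean constant $2^d$ really does need both the at-most-$2^d$ leaf count of a depth-$d$ tree and the Cauchy--Schwarz combination with $\sum_\ell\pi_\ell=1$. Transferring this from the truncation $g$ back to $f$ while keeping the total error within $4(1-c/2)^d s+2^{d+2}\beta$ is the main bookkeeping burden.
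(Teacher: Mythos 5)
Your high-level decomposition into a high-degree tail $A$ and a low-degree small-coefficient piece $B$ via a truncated tree $g$ is the same as the paper's, but there are two real problems with how you execute it, the second of which you yourself flag as the ``main obstacle'' but do not resolve.

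First, you truncate to $\pm1$-valued leaves, so $\|f-g\|_2^2 \le 4 s (1-c/2)^d$ already exhausts the entire $4(1-c/2)^d s$ budget on the tail $A$ alone, with nothing left for the cross terms in Step~2. The paper truncates to a $0$-valued leaf, making $g:\11^n\to\{-1,0,1\}$, so that $(f-g)^2=1$ wherever they differ and $\|f-g\|_2^2 = \Pr[f\ne g] \le s(1-c/2)^d$ with no factor of $4$. That factor-of-$4$ saving is exactly what lets the paper spend one copy of $s(1-c/2)^d$ on $A$, one on the $|\hat g|\ge 2\beta$ sub-case, and two more (via $(a+b)^2\le 2a^2+2b^2$) on the $|\hat g|<2\beta$ sub-case, summing to the stated $4(1-c/2)^d s$.

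Second, and more fundamentally, your reduction $B \le \beta\sum_{|S|\le d}|\hat f(S,\mu)|$ requires a low-degree $L_1$ bound on $f$ itself, and your plan to transfer $\sum_S|\hat g(S,\mu)|\le 2^d$ from $g$ back to $f$ via $|\hat f(S,\mu)|\le|\hat g(S,\mu)|+|\hat f(S,\mu)-\hat g(S,\mu)|$ does not go through: $\sum_{|S|\le d}|\hat f(S,\mu)-\hat g(S,\mu)|$ is an $L_1$ sum over $\binom{n}{\le d}$ coefficients for which you only have the $L_2$ control $\|f-g\|_2 = O(\sqrt{s}(1-c/2)^{d/2})$, so Cauchy--Schwarz introduces a $\sqrt{\binom{n}{\le d}}$ factor. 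That $n$-dependence is not a bookkeeping nuisance that the factor-$4$ slack can absorb; it is an obstruction. The paper sidesteps it by never invoking any $L_1$ bound on $f$: it splits $B$'s index set on the size of $\hat g(S,\mu)$ rather than $\hat f(S,\mu)$. When $|\hat g(S,\mu)|\ge 2\beta$ (and $|\hat f(S,\mu)|\le\beta$), each such $S$ forces $(\hat f-\hat g)^2\ge\beta^2$, so there are at most $\|f-g\|_2^2/\beta^2$ of them and each contributes $\hat f^2\le\beta^2$, giving $\le\|f-g\|_2^2$. When $|\hat g(S,\mu)|<2\beta$, one uses $\hat f^2\le 2(\hat f-\hat g)^2+2\hat g^2$ and the $L_1$ bound \emph{on $g$ only}: $\sum_{|\hat g|<2\beta}\hat g^2\le 2\beta\sum_S|\hat g|\le 2^{d+1}\beta$. (The paper gets $\sum_S|\hat g(S,\mu)|\le 2^d$ simply by noting a depth-$d$ tree's function has at most $4^d$ nonzero $\mu$-Fourier coefficients and applying Cauchy--Schwarz; your leaf-indicator calculation is a correct but more elaborate route to the same $2^d$.) If you adopt both the $0$-leaf truncation and the split on $|\hat g|$ instead of $|\hat f|$, your Step~2 becomes the paper's and the constants land exactly on $4(1-c/2)^d s + 2^{d+2}\beta$.
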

Hence, it is to be shown that algorithm $L$ identifies these heavy coefficients and estimates them well.   The proof of this lemma is deferred until after the proof of the main theorem.

\begin{proof}[Proof of Theorem \ref{th:MAIN}]
First, note that for any $g:\11^n \rightarrow \reals$ and any distribution $\cD$ over $\11^n$, $\Pr_{x \sim \cD}[\sgn(g(x))\neq f(x)] \leq \E_{x \sim \cD}[(g(x)-f(x))^2]$.  The reason is that any time $\sgn(g(x)) \neq f(x)$, we have that $|g(x)-f(x)|\geq 1$, since $f:\11^n \rightarrow \11$.  Hence, it suffices to show that with probability $\geq 1-\delta$, $$\E_{x \sim \cD_\mu}[(p(x)-f(x))^2] = \sum_S (\hat{p}(S,\mu)-\hat{f}(S,\mu))^2\leq \eps.$$
This is what we do. Define the estimate of $\hat{f}(S,\mu)$ (based on the data) to be,
$$e(S)=\frac{1}{m}\sum_{j=1}^m y^j z^j_S.$$
By equation (\ref{eq:qe}), we have that $\E[e(S)]=\hat{f}(S,\mu)$, for any fixed $S, \mu$, where the expectation is taken over the $m$ data points.  Of course, steps (3a) and (4) only evaluate $e(S)$ on a small number of sets, but it is helpful to define $e$ for all $S$.

Let $d=\frac{2}{c}\log\frac{12s}{\eps}$, $D=\frac{\log m}{12}(1-\max_{i\leq n}{|\mu_i|})$, $\beta = (\eps/(12s))^{1+2/c}$, $t=m^{-1/3}$, and $\tau =\frac{t\sqrt{\eps}}{4}$.  Note that $D \geq \frac{\log m}{12}c >d$ for $m=\poly(s/\eps)$, so the algorithm will at least attempt to estimate all coefficients up to degree $d$.

We define the set of {\em gingerbread features} to be,
$$G = \left\{S \subseteq N ~\bigl|~|S|\leq d \wedge |\hat{f}(S,\mu)|\geq \beta\right\}.$$
These are the features that we really require for a good approximation.  We define the set of {\em breadcrumb features} to be,
$$B = \left\{B \subseteq S ~\bigl|~S \in G\right\}.$$
These are the features which will help us find the gingerbread features.  The set of {\em pebble features} is,
$$P= \{\emptyset\} \cup \left\{S \subseteq N ~\bigl|~|S| \leq D, ~|\hat{f}(S,\mu)|\geq t-\tau\right\}.$$
These are the features that might possibly be included in $\cS_n$ on a ``good'' run of the algorithm.  Note that, by Parseval's inequality, $|P|\leq 1+(t-\tau)^{-2} \leq 1+2t^{-2} \leq 3t^{-2}$.
We will argue that, with high probability, $G \subseteq \cS_n \subseteq P$.  In order to do this, we also consider the set of {\em candidate features},
$$C = P \cup \left\{S \cup \{i\} ~\bigl|~S \in P,~ i \in N\right\}.$$
These are the set of all features that we might possibly estimate (evaluate $e(S)$) on a ``good'' run of the algorithm.
Let us formally call a run of the algorithm ``good'' if,
(a) $|\hat{f}(S,\mu)-e(S)| \leq \tau$ for all $S \in C$ and (b) $|\hat{f}(S,\mu)|\geq t+\tau$ for all $S \in B$.
First, we claim that (a) implies $\cS_n \subseteq P$.  This can be seen by induction, arguing that $\cS_i \subseteq P$ for all $i=0,1,\ldots,n$.  This is trivial for $i=0$.  If it holds for $i$, then for $i+1$, we have that the set of features on iteration $i$ that are estimated will all be in $C$, hence will all be within $\tau$ of correct.  Hence, for any of these features that is not in $P$, we will have $|e(s)| <t$ and it will not be included in $\cS_i$.
Second we claim that (a) and (b) imply that $B \subseteq \cS_n$.  The proof of this is similarly straightforward by induction.    So (a) and (b) imply that $G \subseteq \cS_n \subseteq P$, since $G \subseteq B$.  Note that since $|P| \leq 3t^{-2} < m$, the algorithm will not abort and output FAIL in this case.
Now,
$$\sum_S (\hat{p}(S,\mu)-\hat{f}(S,\mu))^2 \leq \sum_{S \in \cS_n} (e(S)-\hat{f}(S,\mu))^2 + \sum_{S \not\in B} \hat{f}^2(S,\mu) \leq |P|\tau^2 + 4(1-c/2)^ds+2^{d+2}\beta.$$
This follows from $|\cS_n| \leq |P|$ and Lemma \ref{lem:throw}.  Hence, a good run has,
$$\sum_S (\hat{p}(S,\mu)-\hat{f}(S,\mu))^2 \leq 3t^{-2}\tau^2 + 4(1-c/2)^ds+2^{d+2}\beta \leq \eps,$$
for the choice of parameters above, because $3t^{-2}\tau^2=(3/16)\eps$, $4(1-c/2)^ds \leq \eps/3$, and $2^{d+2}\beta \leq \eps/3$.  This means that every good run outputs a hypothesis of error $\leq \eps$.  It remains to show that the probability of a good run is at least $1-\delta$, which we do by the union bound over the two events (a) and (b).  By Lemma \ref{lem:2} property (b) fails with probability at most,
$$(t+\tau)^{1/2}\beta^{-5/2}(2/c)^{2d} \leq 2m^{-1/6} (12s/\eps)^{c'} \leq \delta/2,$$ for some constant $c'$ and $m=\poly(ns/(\delta\eps))$.  Finally, it remains to show that (a) fails with probability at most $\delta/2$.  First, we need to bound $|z^j_S|$ for each $S \in C$.  Let $v = 1-\max_{i\leq d} |\mu_i| \in [c,1]$ so that $D=\frac{\log m}{12}v$
We first observe that $|z_i(x,\mu)| \leq \frac{2-v}{\sqrt{1-(1-v)^2}} \leq 2/v$ for any $i \in N$, and $x \in \11^n$, by the definition of $z$.  This means that $|z_S(x,\mu)|\leq (2/v)^{\frac{\log m}{12}v}\leq m^{1/12}$ for all $S \in C$, $x \in \11^n$, using the fact that $(2/v)^{v}\leq e$ for all $v \leq 1$.  Finally, by Chernoff-Hoeffding bounds, the probability of $|e(S)-\hat{f}(S,\mu)|\geq \tau$ on any $S \in C$ is at most $2e^{-m\tau^2/(2m^{1/6})}.$  Since $|C| \leq n |P| \leq 3nt^{-2}$, it suffices to show that this is at most $\delta/(2|C|) \geq \delta t^2/(6n)$.  In other words, to finish, we need that $2e^{-m^{1/6}\eps/32} \geq \delta m^{-2/3}/(6n)$, which is clearly true for $m$ sufficiently large, in particular $\poly(ns/(\delta/\eps))$ certainly suffices.
\end{proof}

We now prove Lemma \ref{lem:throw}.
\begin{proof}[Proof of Lemma \ref{lem:throw}]
Let $g:\11^n\rightarrow \{-1,0,1\}$ be the function computed by the truncated decision tree in which each internal node at depth $d$ has been replaced by a leaf of value $0$.  Then,
$$\sum_S (\hat{f}(S,\mu)-\hat{g}(S,\mu))^2 = \E_{x \sim \cD_\mu}[(f(x)-g(x))^2] = \Pr_{x \sim \cD_\mu}[f(x)\neq g(x)] \leq (1-c)^d s.$$
The last inequality follows from the fact that the probability of reaching any leaf at depth $d$ is at most $(1-c)^d$.
Since $g$ is degree $d$, $\sum_{|S|>d} \hat{f}^2(S,\mu) \leq (1-c)^ds$.  Thus by removing all terms of degree greater than $d$, we throw out at most $(1-c)^ds$ mass.  Hence, it suffices to show that,
$$\sum_{S: |\hat{f}(S,\mu)|\leq \beta} \hat{f}^2(S,\mu) \leq 3(1-c)^ds+2^{d+2}\beta.$$
This can be done by breaking it into two cases,
$$\sum_{S: |\hat{f}(S,\mu)|\leq \beta} \hat{f}^2(S,\mu) = \sum_{S: |\hat{f}(S,\mu)|\leq \beta\wedge |\hat{g}(S,\mu)|\geq 2\beta} \hat{f}^2(S,\mu)+\sum_{S: |\hat{f}(S,\mu)|\leq \beta\wedge |\hat{g}(S,\mu)|\leq 2\beta} \hat{f}^2(S,\mu).$$
Each $S$ occurring in the first term above contributes at least $\beta^2$ to $\sum_S (\hat{f}(S,\mu)-\hat{g}^2(S,\mu)\leq (1-c)^d s$, hence there can be at most $(1-c)^ds/\beta^2$ terms in the first term above, and
$$\sum_{S: |\hat{f}(S,\mu)|\leq \beta\wedge |hat{g}(S,\mu)|\geq 2\beta} \hat{f}^2(S,\mu) \leq \beta^2 \frac{(1-c)^d s}{\beta^2}=(1-c)^ds.$$
Using the fact that $(a+b)^2 \leq 2(a^2+b^2)$, for any reals $a,b$, we have,
\begin{align*}
\sum_{S: |\hat{f}(S,\mu)|\leq \beta\wedge |\hat{g}(S,\mu)|\leq 2\beta} \hat{f}^2(S,\mu) &\leq
\sum_{S: |\hat{f}(S,\mu)|\leq \beta\wedge |\hat{g}(S,\mu)|\leq 2\beta} 2\left((\hat{f}(S,\mu)-\hat{g}(S,\mu))^2+\hat{g}^2(S,\mu)\right)
\end{align*}
Now we know that $\sum_{S}(\hat{f}(S,\mu)-\hat{g}(S,\mu))^2 \leq (1-c)^ds$, so this gives an upper bound of $2(1-c)^ds$ on the sum of the first terms in the above.  It suffices to show that,
$$\sum_{S: |\hat{g}(S,\mu)|\leq 2\beta} \hat{g}^2(S,\mu) \leq 2^{d+1}\beta.$$
To see this, note that $g$ has at most $4^d$ nonzero terms, as a depth-$d$ decision tree.  And since any vector $v \in \reals^{4^d}$ with $\|v\|\leq 1$ has $\|v\|_1 \leq 2^d$, we have that $\sum_S |\hat{g}(S,\mu)| \leq 2^d$.  Finally,
$$\sum_{S: |\hat{g}(S,\mu)|\leq 2\beta} \hat{g}^2(S,\mu) \leq \sum_S |\hat{g}(S,\mu)|2\beta \leq 2^{d+1}\beta.\qedhere$$
\end{proof}

\section{Conclusions}\label{sec:conc}

In conclusion, we have shown in a precise sense, that all decision trees are learnable from most product distributions.  The main tool we have is a type of generalization of KM that uses random examples drawn from a (perturbed) product distribution, and works only for terms of degree $O(\log n)$.  Learning decision trees is a clear demonstration of the power of a new model.  However, the questions raised by such a tool are perhaps even more interesting.  First, can one learn DNFs from most product distributions?  Second, can one agnostically learn in these settings, for example can one agnostically learn decision trees in this setting?  A third and very interesting direction would be to go beyond product distributions to arbitrary perturbed distributions.  To be precise, let $\cD$ be an arbitrary distribution on $\11^n$.  Let $a,b \in_\cU [0,c]^n$ be two uniformly random perturbation vectors.  Consider the distribution in which $x$ is first chosen from $\cD$ and then each bit $x_i$ is altered as follows: if $x_i=1$ then $x_i$ is flipped with probability $a_i$, if $x_i=-1$ then $x_i$ is flipped with probability $b_i$.  This gives a new type of perturbed distribution on inputs which is not in general a product distribution.  Hence, our current techniques will not work but it is possible that others will.

Finally, we mention that the Goldreich-Levin algorithm \cite{GoldreichLevin:89}, similar to KM, has a number of applications in computational complexity and other areas.  It would be interesting to see if these applications could also be studied from random examples, instead of black-box access, in a smoothed analysis setting.

\noindent
{\bf Acknowledgments}. We are very grateful to Ran Raz, Ryan O'Donnell, and Prasad Tetali for illuminating discussions.

\bibliographystyle{siam}
\bibliography{ag}

\begin{thebibliography}{10}

\bibitem{Blum:92}
{\sc A.~Blum}, {\em Rank-$r$ decision trees are a subclass of $r$-decision
  lists}, Information Processing Letters, 42 (1992), pp.~183--185.

\bibitem{BMO+:05}
{\sc N.~Bshouty, E.~Mossel, R.~O'Donnell, and R.~Servedio}, {\em {Learning
  {DNF} from Random Walks}}.
\newblock To appear in {\em Journal of Computer and System Sciences}, 2005.

\bibitem{EH:89}
{\sc A.~Ehrenfeucht and D.~Haussler}, {\em Learning decision trees from random
  examples}, Information and Computation, 82 (1989), pp.~231--246.

\bibitem{GoldreichLevin:89}
{\sc O.~Goldreich and L.~Levin}, {\em A hard-core predicate for all one-way
  functions}, in Proceedings of the Twenty-First Annual Symposium on Theory of
  Computing, 1989, pp.~25--32.

\bibitem{GKK:08}
{\sc P.~Gopalan, A.~T. Kalai, and A.~R. Klivans}, {\em Agnostically learning
  decision trees}, in Proceedings of the 40th annual ACM symposium on Theory of
  computing, New York, NY, USA, 2008, ACM, pp.~527--536.

\bibitem{Jackson:97}
{\sc J.~Jackson}, {\em An efficient membership-query algorithm for learning
  {DNF} with respect to the uniform distribution}, Journal of Computer and
  System Sciences, 55 (1997), pp.~414--440.

\bibitem{JacksonServedio:03}
{\sc J.~Jackson and R.~Servedio}, {\em Learning random log-depth decision trees
  under the uniform distribution}, in Proceedings of the 16th Annual Conf. on
  Computational Learning Theory and 7th Kernel Workshop, 2003, pp.~610--624.

\bibitem{KushilevitzMansour:93}
{\sc E.~Kushilevitz and Y.~Mansour}, {\em Learning decision trees using the
  {F}ourier spectrum}, SIAM J. on Computing, 22 (1993), pp.~1331--1348.

\bibitem{Mitchell:97}
{\sc T.~M. Mitchell}, {\em Machine Learning}, McGraw-Hill, New York, 1997.

\bibitem{MOS:03}
{\sc E.~Mossel, R.~O'Donnell, and R.~Servedio}, {\em Learning juntas}, in
  Proceedings of the 35th Annual Symposium on Theory of Computing, 2003.

\bibitem{Selten:75}
{\sc R.~Selten}, {\em Reexamination of the perfectness concept for equilibrium
  points in extensive games}, International Journal of Game Theory.

\bibitem{SpielmanTeng:04}
{\sc D.~A. Spielman and S.-H. Teng}, {\em Smoothed analysis of algorithms: Why
  the simplex algorithm usually takes polynomial time}, J. ACM, 51 (2004),
  pp.~385--463.

\bibitem{Valiant:84}
{\sc L.~Valiant}, {\em A theory of the learnable}, Communications of the ACM,
  27 (1984), pp.~1134--1142.

\end{thebibliography}

\end{document}